\DeclareMathOperator*{\argmin}{arg\,min}
\DeclareMathOperator*{\dvg}{\operatorname{div}}
\title{A Continuous Max-Flow Approach to General Hierarchical Multi-Labeling Problems}
\author{John S.H. Baxter\supit{a,b}, Martin Rajchl\supit{a,b},
	Jing Yuan\supit{a,b}, and Terry M. Peters\supit{a,b}
\skiplinehalf
\supit{a}Robarts Research Institute, London, Ontario, Canada; \\
\supit{b}Western University, London, Ontario, Canada
}
\begin{document} 
  \maketitle 

%%%%%%%%%%%%%%%%%%%%%%%%%%%%%%%%%%%%%%%%%%%%%%%%%%%%%%%%%%%%% 
\begin{abstract}
Multi-region segmentation algorithms often have the onus of incorporating complex anatomical knowledge representing spatial or geometric relationships between objects, and general-purpose methods of addressing this knowledge in an optimization-based manner have thus been lacking. This paper presents Generalized Hierarchical Max-Flow (GHMF) segmentation, which captures simple anatomical part-whole relationships in the form of an unconstrained hierarchy.  Regularization can then be applied to both parts and wholes independently, allowing for spatial grouping and clustering of labels in a globally optimal convex optimization framework. For the purposes of ready integration into a variety of segmentation tasks, the hierarchies can be presented in run-time, allowing for the segmentation problem to be readily specified and alternatives explored without undue programming effort or recompilation.
\end{abstract}

%>>>> Include a list of keywords after the abstract 
\keywords{Multi-region segmentation, optimal segmentation}

%%%%%%%%%%%%%%%%%%%%%%%%%%%%%%%%%%%%%%%%%%%%%%%%%%%%%%%%%%%%%
\section{INTRODUCTION}
\label{sec:intro}
Multi-region segmentation problems are becoming increasingly common in medical imaging, whether using multiple areas of interest to develop nuanced metrics for computer-assisted diagnosis, or to provide context in image-guided interventions. However, the segmentation of multiple regions simultaneously has traditionally been a difficult problem, especially when the regions have defined geometric or spatial relationships with each other, which can be considered as an abstract form of anatomical knowledge. Traditional approaches to multi-region segmentation have been primarily model-based or atlas-based, both of which require a large number of prior segmented images to be of use and often have difficulty adapting to unanticipated or unpredictable pathologies. 

Recently, optimization approaches have arisen to tackle multi-region segmentation problems, notably discrete graph-cuts\cite{boykov_fast_2001}. Variational and continuous counterparts have since arisen to handle problems of metrification and stair-case artifacts unavoidable in discrete methods \cite{yuan_study_2010}. These algorithms minimize an energy functional subject to constraints which can represent anatomical knowledge, optimizing all regions in the image in tandem. Methods for extending these approaches to an arbitrary number of labels have been proposed \cite{pock_convex_2008,yuan_continuous_2010,bae_fast_2011} but these either do not allow for the specification of abstract anatomical knowledge, or constrain the addressed problems to those fitting a particular geometric form. Relaxing those constraints in the discrete form while maintaining global optimality has been addressed by Delong et al. \cite{delong2009globally} by creating label hierarchies with containment and exclusion (partition) operators that preserve the submodularity of the energy functional.

The motivation behind this work is to extend these hierarchies to the continuous case, using variational optimization to optimally segment an image into multiple regions and hierarchical label orderings to provide some abstract anatomical knowledge into how those regions interact.

\section{Contributions}
This paper proposes a continuous max-flow formulation which addresses a hierarchical multi-labeling problem. We address this by building a novel continuous max-flow model which scales based on an input hierarchy. We can then show the equivalence between this formulation and the convex-relaxation of a continuous min-cut formulation under hierarchical constraints.

This algorithm displays a high degree of parallelism within each optimization step, allowing for acceleration through general purpose graphic processing unit (GPGPU) computation, as well as potential concurrency between optimization steps allowing for additional threading and scheduling to improve performance and allowing for multi-GPU use.

\section{Convex relaxed hierarchical models and Previous Work}

\subsection{Previous Work}
Previous work by Yuan et al. \cite{yuan_study_2010} has addressed the continuous binary min-cut problem:
\begin{gather*}
E(u) = \int\limits_{\Omega}(D_s(x)u(x) + D_t(x)(1-u(x))+ S(x)|\nabla u(x)|)dx \\
\mbox{ s.t. } u(x) \in \{ 0, 1 \}
\end{gather*}
as well as the convex relaxed continuous Potts Model:
\begin{gather*}
E(u) = \sum\limits_{\forall L} \int\limits_{\Omega}(D_L(x)u_L(x)+ S(x)|\nabla u_L(x)|)dx \\
\mbox{ s.t. } u_L(x) \geq 0 \mbox{ and } \sum\limits_{\forall L}u_L(x) = 1 \mbox{ .}
\end{gather*}
These techniques both used a continuous max-flow model with augmented Lagrangian multipliers. In the case of the convex-relaxed continuous Potts model, the source flow had infinite capacity, the costs in the functional corresponding with constraints on the sink flows.

Bae et al. \cite{bae_global_2011} extended the work on the  continuous binary min-cut problem to the continuous Ishikawa model:
\begin{gather*}
E(u) = \sum\limits_{L=0}^N \int\limits_{\Omega}(D_L(x)u_L(x)+ S(x)|\nabla u_L(x)|)dx \\
\mbox{ s.t. } u_L(x) \in \{ 0,1 \} \mbox{ and } u_{L+1}(x) \leq u_L(x)
\end{gather*}
using similar variational methods but a tiered continuous graph analogous to that used by Ishikawa \cite{ishikawa_exact_2003} in the discrete case, that is, with finite capacities on intermediate flows between labels.

These models have since been extended to incorporate star-shaped constraints on the various labels\cite{yuan_efficient_2012}. The convex-relaxed continuous Potts model has also been extended to encorporate a limited hierarchical constraint by Rajchl et al.\cite{rajchl_interactive_2014} for myocardial scar segmentation.

\subsection{Convex relaxed hierarchical models}
Hierarchical models are a general extension of both Potts and Ishikawa models. As with those models, the problem can be expressed as an optimization problem with the given objective function:
\begin{equation}
\underset{ \{\Omega_L\} }{\min} \text{ }  E = \sum\limits_{\forall L} \left( \int_{\Omega_L} D_L(x)dx+ \int_{\delta \Omega_L} S_{L}(x) dx \right)
\end{equation}
where $D_L(x) \geq 0$ and $S_L(x) \geq 0$ weight the interior and boundary of $L$ as a function of position $x$. But in the case of hierarchical models, the sets, $\Omega_L$ are not entirely disjoint (as in the Potts model) or are subset of each other (as in the Ishikawa model) but can be arranged in a hierarchy, this hierarchy can be thought of as a rooted tree. For the sake of notation, we will refer to the `parent' of a label $L$ as $L.P$. The `leaves' of the hierarchy are the set of labels with no children, that is $\mathcal{L} = \left\{ L | L.C = \emptyset \right\}$. The set corresponding to the leaf labels forms a partition of the entire image as in the constraints of the Potts model, that is:
\begin{equation}
\forall L_1,  L_2 \in \mathcal{L}, \text{ } ( L_1 \neq L_2 \implies \Omega_{L_1} \cap \Omega_{L_2} = \emptyset) 
\end{equation}
and
\begin{equation}
\bigcup_{L \in \mathcal{L}} \Omega_L = \Omega \text{ .}
\label{eq:totality}
\end{equation}
Parent labels are simply the union of their children, that is:
\begin{equation}
\forall L \not\in \mathcal{L}, \bigcup_{L' \in L.C} \Omega_{L'} = \Omega_L \text{ .}
\end{equation}
Other than those, the remaining constraints should ensure that the hierarchy is a valid rooted tree. Specifically, a unique root node exists ($!\exists S (\not\exists S.P) $), the $.P$ and $.C$ operators are consistent and do not form any cycles and that the graph is connected. (One can rewrite equation \eqref{eq:totality} as $\Omega_S = \Omega$.) Note that these constraints are based on the hierarchy rather than the solution space being optimized over. An example of a hierarchy is presented in Figure \ref{fig:lgeMriScar}, adapted from Rajchl et al. \cite{rajchl_interactive_2014} In this example, the leaves are the thoracic background (T), blood pool (B), healthy myocardium (M), and myocardial scar (Sc). The last three labels are all children of a common cardiac (C) label which encourages their spatial grouping.

\begin{figure}[h]
\centering
\includegraphics[width=60mm]{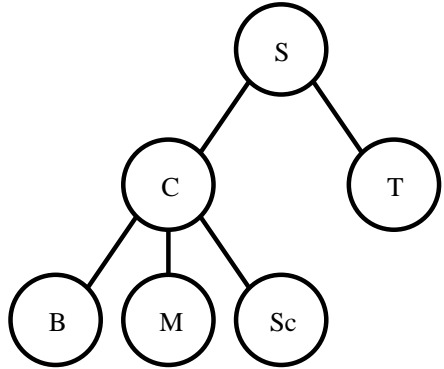}
\caption{Example Segmentation Hierarchy}
\label{fig:lgeMriScar}
\end{figure}

This formulation is relatively powerful in that they can express both Potts and Ishikawa models in common use. To represent the Potts model, the hierarchy should consist only of the root and the leaves with no additional vertices. Any Ishikawa model can be represented using a hierarchy where every parent node bifurcates, that has exactly two children, and at least one of them is a leaf. In that sense, one can think of a general class of multi-labeling problems, the Generalized Hierarchical class, which contains both the class of Potts model problems and Ishikawa model problems as strict sub-classes. (This proofs are provided in Section \ref{sec:problemClass}).

The first step in the convex relaxation of these models is to represent each label's spatial set, $\Omega_L$, as a labeling function, $u_L(x) \in [0,1]$. As in \cite{yuan_study_2010, yuan_continuous_2010, bae_global_2011}, the labeling function has the following properties:
\begin{equation}
u_L(x) = \left\{ \begin{array}{ll}
 1, & x \in \Omega_L \\ 
 0, & x \not\in \Omega_L \\
 \end{array} \right.
\end{equation}
\begin{equation}
\int_{\delta \Omega_L}S_L(x)dx = \int_\Omega S_L(x) |\nabla u_L(x)| dx
\end{equation}
Alternatively, these labeling functions can be interpretted as a fuzzy or probabilistic segmentation.

These properties yield the convex relaxed generalized hierarchical model:
\begin{equation}
\begin{aligned}
& \underset{u}{\text{minimize}}
& & E(u) = \sum\limits_{\forall L} \int_\Omega \left( D_L(x)u_L(x)+S_{L}(x)|\nabla u_L(x)| \right) dx \\
& \text{subject to} & & \forall L (u_L(x) \geq 0) \\
& & &  \forall L \left(\sum\limits_{ L' \in L.C }u_{L'}(x) = u_L(x)\right) \\
& & &   u_S(x)=1
\end{aligned}
\label{ghmf_form}
\end{equation}
which can be solved with global optimality for probabilistic labeling and approximated for discrete labels.

\subsection{Generalized Hierarchical Class} \label{sec:problemClass}
As stated earlier, the class of convex relaxed problems solvable through GHMF encompasses those solvable by both Potts and Ishikawa models via a polynomial time reduction.

\begin{theorem} \label{PottsReduced}
Any instance of the Potts formulation:
\begin{equation*}
\begin{aligned}
& \underset{u}{\text{minimize}}
& & E(u) = \sum\limits_{\forall L} \int_\Omega \left( D_L(x)u_L(x)+S(x)|\nabla u_L(x)| \right) dx \\
& \text{subject to}
& & \forall L (u_L(x) \geq 0) \\
& & & \sum\limits_{ \forall L }u_L(x) = 1
\end{aligned}
\end{equation*}
can be polynomial time reduced to an instance of the GHMF formulation:
\begin{equation*}
\begin{aligned}
& \underset{u}{\text{minimize}}
& & E(u) = \sum\limits_{\forall L} \int_\Omega \left( D_L(x)u_L(x)+S_{L}(x)|\nabla u_L(x)| \right) dx \\
& \text{subject to} & & \forall L (u_L(x) \geq 0) \\
& & &  \forall L \left(\sum\limits_{ L' \in L.C }u_{L'}(x) = u_L(x)\right) \\
& & &   u_S(x)=1
\end{aligned}
\end{equation*}
\end{theorem}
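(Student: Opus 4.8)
The plan is to exhibit an explicit, linear-time construction that maps any Potts instance to a GHMF instance whose feasible set and objective coincide with those of the original problem, so that the two minimizers agree. As indicated in Section~\ref{sec:problemClass}, the natural hierarchy to use is the \emph{flat} one: introduce a single root node $S$ and make every Potts label $L$ a leaf whose parent is $S$, so that $S.C = \{ L : L \text{ a Potts label}\}$ and $L.C = \emptyset$ for every leaf. This clearly requires only $O(n)$ operations in the number of labels $n$, establishing the polynomial-time claim immediately; the substance of the proof is then verifying that this hierarchy reproduces the Potts problem exactly.

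First I would translate the constraints. The GHMF root constraint gives $u_S(x) = 1$, and the single internal sum-constraint, applied at $S$, reads $\sum_{L' \in S.C} u_{L'}(x) = u_S(x)$. Substituting $u_S(x)=1$ and $S.C = \{\text{all Potts labels}\}$ turns this into $\sum_{\forall L} u_L(x) = 1$, which is precisely the Potts partition constraint. Together with the shared non-negativity constraint $\forall L\,(u_L(x)\ge 0)$, the feasible region of the constructed GHMF instance becomes identical to that of the Potts instance. Here I would stress that the hierarchical sum-constraint is to be read as ranging only over non-leaf labels, consistent with the set-level statement for $L \not\in \mathcal{L}$ given earlier, so that no spurious constraint is imposed on the leaves.

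Next I would match the objectives. For each Potts label $L$ I keep the data term $D_L(x)$ and set the per-label smoothness weight to $S_L(x) = S(x)$, so the leaf contributions to the GHMF energy coincide termwise with the Potts energy. The only extra summand in the GHMF energy is the root term $\int_\Omega (D_S(x) u_S(x) + S_S(x)|\nabla u_S(x)|)\,dx$; since $u_S \equiv 1$ is constant its gradient vanishes, and the term reduces to $\int_\Omega D_S(x)\,dx$, a constant independent of $u$. Choosing $D_S(x)=0$ removes it entirely, so the two energies agree on the common feasible set. Hence any minimizer of one is a minimizer of the other, completing the reduction.

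The main obstacle, such as it is, is not computational but a matter of bookkeeping: one must be careful that the GHMF sum-constraint is understood to be vacuous for leaves, since a literal reading at a leaf $L$ with $L.C = \emptyset$ would wrongly force $u_L \equiv 0$ and collapse the feasible region. Once that convention is fixed in line with the set-theoretic definition of parent labels, every step is an identity and no approximation or inequality is incurred. I would therefore present the argument as a direct verification that the constructed instance has the same feasible region and the same objective (up to a removable additive constant), rather than as a two-sided optimality comparison.
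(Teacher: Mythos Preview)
Your proposal is correct and follows essentially the same construction as the paper: a flat hierarchy with the root $S$ parenting all Potts labels, $D_S(x)=0$, and $S_L(x)=S(x)$ on the leaves. Your treatment is in fact slightly more careful, since you justify why the root's regularization term vanishes (via $\nabla u_S=0$) rather than simply setting $S_S(x)=0$, and you make explicit the convention that the sum-over-children constraint is vacuous at the leaves.
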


\begin{proof}[Proof of Theorem \ref{PottsReduced}]
Consider a hierarchy in which the root node, $S$, is immediately partitioned into a set of leaf nodes, each corresponding with a label, $L$, from the Potts model. Note that this formulation makes the Potts model constraint
\begin{equation*}
\sum\limits_{ \forall L }u_L(x) = 1
\end{equation*}
equivalent to the combination of GHMF constraints
\begin{equation*}
\forall L' \left(\sum\limits_{ L \in \{L | L.P = L'\} }u_L(x) = u_{L'}(x)\right) \text{ and } (\not \exists S.P \implies u_S(x)=1) \wedge \exists ! S (\not \exists S.P) \text{ .}
\end{equation*}

Knowing this, we can directly translate the data terms from the Potts model to the GHMF model without change, setting $D_S(x)=0$. The smoothness terms can be copied directly, setting $S_L(x)=S(x)$ for all leaf nodes, and $\alpha_S=S_S(X)=0$ for the root node. This makes the two formulations equivalent. Both the transformations and the construction of the hierarchy, are obviously possible in polynomial time, the output requiring no transformation. \hfill
\end{proof}

\begin{theorem} \label{IshikawaReduced}
Any instance of the Ishikawa formulation with levels $[0 .. N]$:
\begin{equation*}
\begin{aligned}
& \underset{u}{\text{minimize}}
& & E(u) = \sum\limits_{\forall L} \int_\Omega \left( D_L(x)u_L(x)+ S_i(x)|\nabla u_L(x)| \right) dx \\
& \text{subject to}
& & \forall i ( u_{L_i}(x) \geq 0) \\
& & & \sum\limits_{ \forall i \in [1 .. N] }u_{L_i}(x) \leq u_{L_{i-1}}(x) \\
& & & u_{L_0}(x) = 1
\end{aligned}
\end{equation*}
with output labelling, $u_{L_i}(x)$ for $i\in[1..N]$ can be polynomial time reduced to an instance of the GHMF formulation:
\begin{equation*}
\begin{aligned}
& \underset{u}{\text{minimize}}
& & E(u) = \sum\limits_{\forall L} \int_\Omega \left( D_L(x)u_L(x)+ S_{L}(x)|\nabla u_L(x)| \right) dx \\
& \text{subject to} & & \forall L (u_L(x) \geq 0) \\
& & &  \forall L \left(\sum\limits_{ L' \in L.C }u_{L'}(x) = u_L(x)\right) \\
& & &   u_S(x)=1
\end{aligned}
\end{equation*}
\end{theorem}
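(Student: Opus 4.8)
The plan is to follow the template of the proof of Theorem \ref{PottsReduced}: construct a specific hierarchy, fix an identification between the Ishikawa labeling functions and the GHMF labeling functions, show that the two constraint sets coincide under this identification, and then copy the data and smoothness weights so that the energies agree term by term. The one conceptual point to handle is that the GHMF partition constraints are \emph{equalities} whereas the Ishikawa constraints are the nested \emph{inequalities} $u_{L_i}(x) \le u_{L_{i-1}}(x)$; the gap between the two is bridged by taking the per-level ``layer'' differences as the leaf labels of the hierarchy.

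First I would build a chain-shaped (caterpillar) hierarchy, matching the structure advertised in Section \ref{sec:problemClass} in which every internal node bifurcates and has at least one leaf child. Introduce internal nodes $A_0, A_1, \dots, A_{N-1}$ with $A_0 = S$ the root, and leaves $v_0, v_1, \dots, v_N$. For $i < N-1$ let the children of $A_i$ be the leaf $v_i$ and the internal node $A_{i+1}$, and let the children of $A_{N-1}$ be the two leaves $v_{N-1}$ and $v_N$. The intended identification is $u_{A_i}(x) = u_{L_i}(x)$ together with $u_{v_i}(x) = u_{L_i}(x) - u_{L_{i+1}}(x)$, under the convention $u_{L_{N+1}} \equiv 0$ so that $u_{v_N} = u_{L_N}$.

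Next I would verify that the feasibility constraints correspond. The root constraint $u_S = 1$ is exactly $u_{L_0} = 1$. The GHMF partition constraint at node $A_i$ reads $u_{v_i} + u_{A_{i+1}} = u_{A_i}$, which under the identification is the identity $(u_{L_i} - u_{L_{i+1}}) + u_{L_{i+1}} = u_{L_i}$. Crucially, the GHMF nonnegativity $u_{v_i} \ge 0$ becomes precisely the Ishikawa monotonicity $u_{L_{i+1}} \le u_{L_i}$, while $u_{A_i} \ge 0$ gives $u_{L_i} \ge 0$. Thus the identification is a bijection between Ishikawa-feasible and GHMF-feasible labelings, and the required output $u_{L_i}$ for $i \in [1..N]$ is read off directly as $u_{A_i}$ (with $u_{L_N} = u_{v_N}$), so no transformation of the output is needed.

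Finally, to match the objective I would place each Ishikawa term on the node carrying the corresponding labeling function: set $D_{A_i} = D_{L_i}$ and $S_{A_i} = S_i$ for $i = 0, \dots, N-1$, set $D_{v_N} = D_{L_N}$ and $S_{v_N} = S_N$, and set all remaining leaf weights $D_{v_j} = S_{v_j} = 0$ for $j < N$. Summing the GHMF energy over all nodes then reproduces $\sum_{i} \int_\Omega (D_{L_i} u_{L_i} + S_i |\nabla u_{L_i}|)\,dx$ exactly, noting that the $i = 0$ terms are constants in both formulations since $u_{L_0} = u_{A_0} = 1$. Building the hierarchy and copying the weights are linear in $N$, hence polynomial. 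The main obstacle is the constraint step: the insight is that the equality-based partition constraints, together with nonnegativity of the \emph{difference} leaves $u_{v_i}$, are what encode the Ishikawa inequality chain; once the layer variables are chosen as the leaves, the remaining verification is routine algebra.
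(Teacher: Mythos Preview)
Your construction is correct and is essentially the same caterpillar hierarchy the paper builds: your internal nodes $A_i$ are the paper's branch labels $L_i$, your leaves $v_i$ are (up to an index shift) the paper's dummy leaves $B_{i+1}$, and your $v_N$ is the paper's leaf $L_N$, with the same zero weights on the dummy leaves and the original Ishikawa weights carried on the chain. The only cosmetic difference is that you read the desired $u_{L_i}$ directly off the internal-node labelings $u_{A_i}$, whereas the paper recovers them by summing leaf labelings via $u_{L_i}=u_{L_{i+1}}+u_{B_{i+1}}$; since the GHMF formulation optimizes over $u_L$ at every node this is not a substantive distinction.
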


\begin{proof}[Proof of Theorem \ref{IshikawaReduced}]
Consider introducing the dummy labels, $B_i$ for $i\in[1..N]$. Define the labeling function for each of these labels as:
\begin{equation*}
u_{B_i}(x) = u_{L_{i-1}}(x) - u_{L_i}(x) .
\end{equation*}

Note that $u_{B_i}(x) \geq 0$ since $u_{L_i}(x) \leq u_{L_{i-1}}(x)$ is a constraint on the Ishikawa model.
Let us now construct a hierarchy where each branch node $L_i$ where $i\in[0..N-1]$ bifurcates into the node $L_{i+1}$ and leaf node $B_{i+1}$. The label $L_N$ is distinguished from the other non-dummy labels by being a leaf. This hierarchy can be constructed in $O(N)$ time.

For the data terms, we associate every dummy label with the data term $D_{B_i}(x)=0$ and copy over the data terms for the labels $L_i$ from the Ishikawa model. We use the same policy for the smoothness terms, setting $S_{B_i}(x)=0$ and $S_{L_i}(x)=S_i(x)$. Note that this formulation is equivalent to the Ishikawa model, meaning that it will produce the same values of $u_{L_i}$. Note that the definition of $u_{B_i}(x)$ and constraints from the Ishikawa model are together equivalent to the constraints from the GHMF model.

Lastly, we must transform the output from the GHMF model into the equivalent output from the Ishikawa model. First, we should note that the output labeling $u_{L_N}(x)$ is the same for both and can be copied over. Starting from $i=N-1$ and moving to $i=1$, we can compute the output from the Ishikawa model from the output of the GHMF model using the definition of $u_{B_i}(x)$. That is, we construct the Ishikawa output labeling $u_{L_i}(x)$ as $u_{L_{i+1}}(x) + u_{B_{i+1}}(x)$. This construction can occur in $O(XN)$ time, meaning each reduction is polynomial time. \hfill
\end{proof}

\subsection{Data Term Structure}
Without loss of generality, we would like to constrain the data terms, $D_L(x)$, to be non-zero only at the leaves, at non-negative at that. To do this, we would like to show that this constraint does not limit the class of problems handled by this algorithm, and that such a constraint can be implemented in linear time.

\begin{theorem} Data Pushdown Theorem: \label{dataPushdown}\newline
The GHMF formulation:
\begin{equation*}
\begin{aligned}
& \underset{u}{\text{minimize}}
& & E(u) = \sum\limits_{\forall L} \int_\Omega \left( D_L(x)u_L(x)+S_{L}(x)|\nabla u_L(x)| \right) dx \\
& \text{subject to}
& & \forall L (u_L(x) \geq 0) \\
& & &  \forall L \left(\sum\limits_{ L' \in L.C }u_{L'}(x) = u_L(x)\right) \\
& & &  u_S(x) = 1 
\end{aligned}
\end{equation*}
can be polynomial time reduced to an instance of the GHMF formulation with two additional constraints on the input:
\begin{equation*}
\begin{aligned}
& & &  \forall{L}(D_L(x) \geq 0) \\
& & &  |L.C| \neq 0 \implies D_L(x) = 0
\end{aligned}
\end{equation*}
\end{theorem}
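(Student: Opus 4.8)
The plan is to exploit the partition constraints to rewrite every interior data term as a sum of contributions attached to the leaves, and then to use the totality constraint $u_S(x)=1$ to absorb any resulting negativity into an additive constant that does not affect the minimizer. The reduction will act purely on the input data terms, leaving the hierarchy, the smoothness terms, and the optimization variables untouched, so that no back-transformation of the output is required.

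First I would establish the key identity that, for every label $L$, the constraint $\sum_{L' \in L.C} u_{L'}(x) = u_L(x)$ propagates down the tree so that $u_L(x) = \sum_{L'' \in \mathcal{L}(L)} u_{L''}(x)$, where $\mathcal{L}(L)$ denotes the set of leaf descendants of $L$. This follows by induction on the height of $L$: the claim is trivial when $L$ is itself a leaf, and for an internal node it follows by applying the partition constraint once and then the inductive hypothesis to each child. Substituting this into the data energy and exchanging the order of summation gives
\begin{equation*}
\sum_{\forall L} \int_\Omega D_L(x)\, u_L(x)\, dx = \sum_{L'' \in \mathcal{L}} \int_\Omega \left( \sum_{L \in A(L'')} D_L(x) \right) u_{L''}(x)\, dx,
\end{equation*}
where $A(L'')$ is the set of ancestors of $L''$ including $L''$ itself, i.e.\ the root-to-$L''$ path. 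I would therefore define the pushed-down leaf data terms $D'_{L''}(x) = \sum_{L \in A(L'')} D_L(x)$ and set $D'_L(x)=0$ for every internal node. This immediately satisfies the second required constraint $|L.C| \neq 0 \implies D_L(x)=0$, while leaving the total data energy, and the untouched smoothness energy, exactly unchanged.

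It remains to enforce non-negativity. Here I would use the fact that $\sum_{L'' \in \mathcal{L}} u_{L''}(x) = u_S(x) = 1$ pointwise, so adding any position-dependent function $c(x)$ to every leaf data term changes the energy only by $\int_\Omega c(x)\, dx$, a quantity independent of $u$. Choosing $c(x) = -\min_{L'' \in \mathcal{L}} D'_{L''}(x)$ makes each shifted term $D''_{L''}(x) = D'_{L''}(x) + c(x) \geq 0$, giving the first required constraint. Since the two energies differ only by this additive constant, the problems have identical minimizers, so the GHMF output indeed requires no transformation.

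Finally I would bound the running time: computing the ancestor sums $D'_{L''}$ for all leaves is a single depth-first traversal that accumulates $D_L$ along the current path, and the pointwise minimum defining $c(x)$ is a linear scan over labels at each voxel; both are $O(|V|\,|X|)$ in the number of hierarchy nodes and voxels, hence polynomial. The one step requiring genuine care is the minimizer-preservation argument: I expect the main obstacle to be verifying rigorously that the shift by $c(x)$ is legitimate, which hinges entirely on the totality constraint $u_S(x)=1$ and would fail for a formulation lacking it; everything else is a routine rearrangement of the double sum.
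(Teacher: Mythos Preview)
Your proposal is correct and follows essentially the same two-stage argument as the paper: use the partition constraints to push all data terms down to the leaves, then use the totality constraint $u_S(x)=1$ to shift the leaf terms by a pointwise minimum so that they become non-negative while changing the energy only by an additive constant. The only cosmetic difference is that you compute the pushed-down leaf terms in one shot as ancestor-path sums via a single DFS, whereas the paper phrases the same computation as repeated one-level pushdowns applied in a pre-order traversal; the resulting leaf terms, the non-negativity shift, and the polynomial-time bound are identical.
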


\begin{proof}[Proof of Theorem \ref{dataPushdown}] 
Consider the hierarchy, $H$, with a non-leaf node, $L'$. Assume that $L'$ has a non-zero data term associated with it, $D_{L'}(x)$. We can express the objective function as:

\begin{equation*}
\begin{aligned}
& \underset{u}{\text{minimize}}
& & E(u) = \int_\Omega D_{L'}(x)u_{L'}(x)dx + 
\sum\limits_{\forall L \in L'.C} \int_\Omega D_L(x)u_L(x)dx+ \\
& & & \sum\limits_{\forall L \notin L'.C \cup \{L'\}} \int_\Omega D_L(x)u_L(x)dx+ 
\sum\limits_{\forall L} \int_\Omega S_{L}(x)|\nabla u_L(x)| dx \\
\end{aligned}
\end{equation*}

Note that using the second constraint, we can break up the data term associated with the node $L'$ as follows:
\begin{equation*}
\begin{aligned}
\int_\Omega D_{L'}(x)u_{L'}(x)dx +  \sum\limits_{\forall L \in C} \int_\Omega D_L(x)u_L(x)dx 
& = \int_\Omega D_{L'}(x)\left( \sum\limits_{ L \in C }u_L(x) \right)dx +  \sum\limits_{\forall L \in C} \int_\Omega D_L(x)u_L(x)dx \\
& = \sum\limits_{\forall L \in C} \int_\Omega D_{L'}(x)u_L(x)dx +  \sum\limits_{\forall L \in C} \int_\Omega D_L(x)u_L(x)dx \\
& = \sum\limits_{\forall L \in C} \int_\Omega \left(D_L(x)+D_{L'}(x)\right)u_L(x)dx
\end{aligned}
\end{equation*}
We can construct an equivalent problem with data terms, $D'_L(x)$, where:
\begin{equation*}
\begin{aligned}
 D'_{L'}(x) = 0
\end{aligned}
\end{equation*}
\begin{equation*}
\begin{aligned}
 \forall L \in C \left( D'_L(x) = D_L(x)+D_{L'}(x) \right)
\end{aligned}
\end{equation*}
\begin{equation*}
\begin{aligned}
 \forall L \notin C \cup \{{L'}\} \left( D'_L(x)=D_L(x) \right)
\end{aligned}
\end{equation*}

Therefore, we can eliminate the data term for single non-leaf node in $O(NX)$ time by pushing it down to the leaves. We can apply this pushdown procedure in a pre-order traversal, which would ensure that all data terms are pushed down to the leaves using $O(N)$ pushdown operations. Since the optimization formulae are equal, the output does not need to be modified. Therefore, the formulation without the additional constraint can be reduced to that with the additional constraint that only leaf nodes have non-zero data terms. 

The last constraint is trivially held for all branch data terms under the first polynomial time reduction, but we still have to ensure that the leaf node data terms are non-negative. Consider $D_L(x)$ as the data terms after the first polynomial time reduction. We can construct an equivalent set of non-negative data terms by adding a constant value to $D_L(x)$ at each $x$ that makes each term non-negative. The new data terms would therefore be equal to:
\begin{equation*}
D'_L(x) = D_L(x) + \underset{L'}\min{D_{L'}(x)} .
\end{equation*}

This polynomial-time modification only adds a constant term, $ \int_\Omega\underset{L'}\min{D_{L'}(x)}dx$, to the value of $E(u)$ meaning that it does not change the optimal labeling, thus showing that the two formulations are equivalent. \hfill
\end{proof}

This reduction is obviously optimal in terms of asymptotic complexity since it takes the equivalent time as its verification, specifically, the linear time required to ensure that the data terms are non-negative over all voxels.

\section{Continuous Max-Flow Model}
\subsection{Primal Formulation}
The modeling approach is derived from those presented in \cite{bae_global_2011, bae_fast_2011, rajchl_fast_2012, yuan_study_2010, yuan_continuous_2010} and follows along the same format, using duality through an augmented Lagrangian formulation. The primal model represents network flow maximization through a large graph with only the sink flows constrained. The dual of this formulation is the GHMF equation \eqref{ghmf_form} as we shall prove in this section. We can write the primal model as:
\begin{equation}
\label{primal}
\begin{aligned}
\underset{p,q} \max & \int_\Omega p_S(x) dx
\end{aligned}
\end{equation}
\begin{subequations}
subject to the capacity constraints:
\begin{align}
p_L(x) & \leq D_L(x), \text{ } L \in \text{ leaves}\\
|q_L(x)| & \leq S_L(x) \text{ } L \neq S
\end{align}
and the flow conservation constraint:
\begin{align}
0 =\operatorname{div} q_L(x) + p_L(x) - p_{L.P}(x) \text{ .}
\end{align}
\end{subequations}

This is equivalent to a multi-flow problem over a large graph constructed from the image dimensions and the provided hierarchy. The only constraints placed on the capacities in the graph are the spatial constraints limiting the magnitude of the spatial flows, and constraints on the flows from the leaf labels to the sink. The remaining flows, specifically the flows between labels, are assumed to be of infinite capacity.

\subsection{Primal-Dual Formulation}
The primal model can be converted to a primal-dual model through the use of Lagrangian multipliers on the flow conservation constraint $G_L(x) = \operatorname{div} q_L(x) + p_L(x) - p_{L.P}(x) = 0$. This yields the equation:
\begin{equation}
\label{lagrangian}
\begin{aligned}
\underset{u} \min \underset{p,q} \max & \left( \int_\Omega p_S(x) dx + \sum_{\forall L \neq S} \int_\Omega u_L(x) G_L(x) dx \right) \\
p_L(x) & \leq D_L(x), \text{ } L \in \text{ leaves}\\
|q_L(x)| & \leq S_L(x) \text{ } L \neq S \text{ .}
\end{aligned}
\end{equation}

To ensure that this function meets the criteria of the minimax theorem, we must ensure that it is convex with respect to $u$, considering $p,q$ to be fixed, and concave with respect to $p,q$ with $u$ fixed. \cite{ekeland_convex_1976} For the first, it is sufficient to note that if $p,q$ are fixed, then $G$ is fixed as well, meaning that \eqref{lagrangian} is linear and therefore convex with respect to $u$. It should also be noted that $G$ is a linear function of $p,q$, meaning that \eqref{lagrangian} is again linear and therefore concave with respect to $p,q$, confirming the existance of a saddle point and the equivalence of the formulation regardless of the order of the prefix max and min operators. \cite{ekeland_convex_1976}

\subsection{Dual Formulation}
To show the equivalence of the primal-dual formulation to the convex relaxed generalized hierarchical model, we can consider the optimization of each set of flows. We can find the saddle point through the optimization of the sink-flows, $p_L$, working bottom-up and the spatial flows within each label. Starting with any leaf label, $L$, we can isolate $p_L$ in \eqref{lagrangian} giving
\begin{equation}
\underset{u} \min \underset{p_L(x) \leq D_L(x)} \max \int_\Omega u_L(x)p_L(x)dx = \underset{u, u_L(x) \geq 0} \min  \int_\Omega u_L(x)D_L(x)dx
\end{equation}
when $u_L(x) \geq 0$. (If $u_L(x) < 0$, the function can be arbitrarily maximized by $p_L(x) \to -\infty$.) Working upwards, every branch label, $L$, can be isolated in \eqref{lagrangian} as
\begin{equation}
\underset{u} \min \underset{p_L(x)} \max \left( \int_\Omega u_L(x)p_L(x)dx - \sum_{\forall L' \in L.C} \int_\Omega u_{L'}(x)p_L(x)dx \right) = 0
\end{equation}
at the saddle point defined by $u_L(x) =  \sum_{\forall L' \in L.C} u_{L'}(x)$. Lastly, the source flow, $p_S$, can be isolated in a similar manner, that is:
\begin{equation}
\underset{u} \min \underset{p_S(x)} \max \left( \int_\Omega p_S(x)dx - \sum_{\forall L' \in S.C} \int_\Omega u_{L'}(x)p_S(x)dx \right) = 0
\end{equation}
at the saddle point defined by $1 =  \sum_{\forall L' \in S.C} u_{L'}(x)$. These constraints combined yield the labeling constraints in the original formulation. The maximization of the spatial flow functions can be expressed in a well-studied form \cite{giusti_minimal_1984} as:
\begin{equation} \label{eq:regularSpatialFlow}
\begin{aligned}
\int_\Omega u_L(x) \dvg q_L(x)dx
& = \int_\Omega \left( \dvg (u_L(x)q_L(x)) - q(x) \cdot \nabla u_L(x) \right) dx \\
& = \int_\Omega \dvg (u_L(x)q_L(x)) dx  - \int_\Omega  q_L(x) \cdot \nabla u_L(x) dx \\
& = \oint_{\delta\Omega} u_L(x)q_L(x) \cdot d\mathbf{s}  - \int_\Omega  q_L(x) \cdot \nabla u_L(x) dx \\
& = - \int_\Omega  q_L(x) \cdot \nabla u_L(x) dx \\
\underset{|q_L| \leq S_L(x)} \max \int_\Omega u_L(x) \dvg q_L(x)dx
& = \underset{|q_L| \leq S_L(x)} \max  - \int_\Omega  q_L(x) \cdot \nabla u_L(x) dx \\
& =  - \int_\Omega  \left( -\frac{S_L(x)}{|\nabla u_L(x)|} \nabla u_L(x) \right) \cdot \nabla u_L(x) dx \\
& = \int_\Omega S_L(x) |\nabla u_L(x)| dx \\
\underset{u_L \geq 0} \min \underset{|q_L| \leq S_L(x)} \max \int_\Omega u_L(x) \dvg q_L(x)dx 
& = \underset{u_L \geq 0} \min \int_\Omega S_L(x) |\nabla u_L(x)| dx \text{ .}
\end{aligned}
\end{equation}
Meaning that we can express the saddle point of equation \eqref{lagrangian} as the original energy functional, \eqref{ghmf_form} and therefore, finding the saddle point of \eqref{lagrangian} is equivalent to solving the GHMF segmentation problem.

\newpage
\section{Solution to Primal-Dual Formulation}
To address the optimization problem, We can find this saddle point by augmenting the Lagrangian function \cite{bertsekas_nonlinear_1999}:
\begin{equation}
\label{augmented}
\begin{aligned}
\underset{u} \min \underset{p,q} \max & \left( \int_\Omega p_S(x) dx + \sum_{\forall L \neq S} \int_\Omega u_L(x) G_L(x) dx - \frac{c}{2} \sum_{\forall L \neq S} \int_\Omega G_L(x)^2 dx \right)\\
p_L(x) & \leq D_L(x), \text{ } L \in \text{ leaves}\\
|q_L(x)| & \leq S_L(x) \text{ } L \neq S
\end{aligned}
\end{equation}
where $c > 0$ is an additional positive parameter penalizing deviation from the flow conservation constraint. Using this formula, we can maximize each component individually and iteratively. We use the following steps iteratively:
{
\begin{enumerate}
\setlength{\belowdisplayskip}{2pt} \setlength{\belowdisplayshortskip}{2pt}
\setlength{\abovedisplayskip}{2pt} \setlength{\abovedisplayshortskip}{2pt}
\item Maximize \eqref{augmented} over $q_L$ at all levels by:
\begin{equation*}
q_L(x) \gets \operatorname{Proj}_{|q_L(x)| \leq S_L(x) } \left( q_L(x) + \tau \nabla \left( \dvg q_L(x) + p_L(x) - p_{L.P}(x) - u_L(x)/c \right) \right) 
\end{equation*}
which is a Chambolle projection iteration with descent parameter $\tau > 0$. \cite{chambolle_algorithm_2004}
\item Maximize \eqref{augmented} over $p_L$ at the leaves analytically by:
\begin{equation*}
p_L(x) \gets \min\lbrace D_L(x), p_{L.P}(x) - \dvg q_L(x) + u_L(x)/c \rbrace
\end{equation*}
\item Maximize \eqref{augmented} over $p_L$ at the branches analytically by:
\begin{equation*}
p_L(x) \gets \frac{1}{|L.C|+1} \left( (p_{L.P}(x) - \dvg q_L(x) + u_L(x)/c) + \sum_{\forall L' \in L.C}\left( p_{L'}(x) + \dvg q_{L'}(x) - u_{L'}(x)/c \right) \right)
\end{equation*}
\item Maximize \eqref{augmented} over $p_S$ analytically by:
\begin{equation*}
p_S(x) \gets \frac{1}{|S.C|} \left( 1/c + \sum_{\forall L' \in S.C}\left( p_{L'}(x) + \dvg q_{L'}(x) - u_{L'}(x)/c \right) \right)
\end{equation*}
\item Minimize \eqref{augmented} over $u_L$ analytically by:
\begin{equation*}
u_L(x) \gets u_L(x) - c \left( \dvg q_L(x) - p_{L.P}(x) + p_L(x) \right)
\end{equation*}
\end{enumerate}
}

\subsection{Generalized Hierarchical Max-Flow Algorithm}
In order to improve convergence rate, we perform an initialization step that ensures optimality for the zero-smoothness condition. When performing the specific tasks outlined in the previous section, we proceed in a bottom-up manner, optimizing the leafs (where a capacity constraint on the sink flow exists) and propagating through the branches where no such capacity constraints exist. The sequential algorithm used is:
\begin{algorithm}[!h]
InitializeSolution(S) \;
\While{not converged} {
  \For{ $\forall L \neq S$ } {
    $\forall x, q_L(x) \gets \operatorname{Proj}_{|q_L(x)| \leq S_L(x) } \left( q_L + \tau \nabla \left( \dvg q_L(x) + p_L(x) - p_{L.P}(x) - u_L(x)/c \right) \right)$ \;
  }
  UpdateSinkFlows($S$) \;
  \For{ $\forall L \neq S$ } {
    $\forall x, u_L(x) \gets u_L(x) - c \left( \dvg q_L(x) - p_{L.P}(x) + p_L(x) \right)$ \;
  }
}
\label{alg:solverSeq}
\end{algorithm}

which  makes use of the following recursive function definitions:

\begin{algorithm}[!h]
{ \bf InitializeSolution(L) } \\
\For{ $\forall L' \in L.C$} {
  InitializeSolution($L'$) \;
}
$\forall x, p_L(x) \gets \underset{L' \in \text{ leaves}}\min D_{L'}(x)$ \;
\If{ $L$ is a leaf }{
  \eIf{ $L \in \underset{L' \in \text{ leaves}}\argmin D_{L'}(x) $ } {
    $\forall x, u_L(x) \gets 1 / | \underset{L' \in \text{ leaves}}\argmin D_{L'}(x) | $ \;
  }{
    $\forall x, u_L(x) \gets 0$ \;
  }
}
\If{ $L$ is a branch }{
  $\forall x, u_L(x) \gets 0$ \;
  \For{ $\forall L' \in L.C$ } {
    $\forall x, u_L(x) \gets u_L(x) + u_{L'}(x)$ \;
  }
}
\end{algorithm}

\begin{algorithm}[!h]
{ \bf UpdateSinkFlows(L) } \\
\For{ $\forall L' \in L.C$} {
  UpdateSinkFlows($L'$) \;
}
\If{ $L$ is a leaf }{
  $\forall x, p_L(x) \gets \min\lbrace D_L(x), p_{L.P}(x) - \dvg q_L(x) + u_L(x)/c \rbrace$ \;
}
\If{ $L = S$ }{
  $\forall x, p_S(x) \gets 1/c$ \;
  \For{ $\forall L' \in S.C$ } {
     $\forall x, p_S(x) \gets p_S(x) + p_{L'}(x) + \dvg q_{L'}(x) - u_{L'}(x)/c $ \;
  }
  $\forall x, p_S(x) \gets \frac{1}{|S.C|}p_S(x) $ \;
}
\If{ $L$ is a branch }{
  $\forall x, p_L(x) \gets p_{L.P}(x) - \dvg q_L(x) + u_L(x)/c $ \;
  \For{ $\forall L' \in L.C$ } {
     $\forall x, p_L(x) \gets p_L(x) + p_{L'}(x) + \dvg q_{L'}(x) - u_{L'}(x)/c$ \;
  }
  $\forall x, p_L(x) \gets \frac{1}{|L.C|+1} p_L(x)$ \;
}
\end{algorithm}
\newpage
For the sake of conciseness, the `{\bf for} \textit{$\forall x$} {\bf do} ' loops surrounding each assignment operation have been replaced with the prefix $\forall x$ in both the algorithm and the function definitions.

\newpage
\section{Discussion and Conclusions}
In this paper, we present an algorithm for addressing continuous max-flow problems where the labels can be arranged in a hierarchy using a subset/superset relationship as the parent/child relationship. This approach generalizes both continuous Potts and Ishikawa models, and opens the way to incorporating partial ordering and spatial grouping constraints into a wide array of segmentation problems with more complex spatial anatomy. In addition, we have proven that such a solver need only address problems in which data terms are provided solely at the leaf nodes, the final labels, and nowhere else through a simple linear-time reduction. 

This solver has been implemented using the NVIDIA Compute Unified Device Architecture (CUDA) allowing for performance improvements through the use of intra-task parallelization. To further improve performance, inter-task concurrency has been exploited, allowing for multiple graphics cards to be used simultaneously on a single segmentation problem.

%%%%%%%%%%%%%%%%%%%%%%%%%%%%%%%%%%%%%%%%%%%%%%%%%%%%%%%%%%%%%
\acknowledgments %>>>> equivalent to \section*{ACKNOWLEDGMENTS}
The authors would like to acknowledge Dr.\ Elvis Chen and Jonathan McLeod for their invaluable discussion, editing, and technical support.

%%%%%%%%%%%%%%%%%%%%%%%%%%%%%%%%%%%%%%%%%%%%%%%%%%%%%%%%%%%%%
%%%%% References %%%%%

\bibliographystyle{spiebib}   %>>>> makes bibtex use spiebib.bst
\bibliography{TechReportGHMF} 

\end{document}